%% file: ___preprint___Search_Is_a_Sampling_Policy__Causal_Correction_of_Evidence_Selection_Bias_in_Deep_Research_Agents.tex
\documentclass{article} 
\usepackage{xspace}

\newcommand{\method}[1]{\textnormal{#1}}
\newcommand{\methodname}[1]{\textnormal{#1}}
\usepackage{iclr2027_conference,times}

\input{math_commands.tex}

\usepackage{graphicx}
\usepackage{url}
\usepackage{amsmath}
\usepackage{amssymb}
\usepackage{amsthm}
\usepackage{booktabs}
\usepackage{array}
\usepackage{xcolor}
\definecolor{citecolor}{HTML}{0071bc}
\usepackage[pagebackref=false,breaklinks=true,letterpaper=true,colorlinks,citecolor=citecolor,bookmarks=false]{hyperref}

\newtheorem{theorem}{Theorem} \newtheorem{assumption}{Assumption}

\graphicspath{{figures/}}

\title{Search Shapes Conclusions: Auditing Evidence Selection Bias in Deep Research Agents}

\author{
Shuyao Xiao$^{1,2}$, Shengling Wang$^{1}$, Xuan Chen$^{2}$,
Ke Chao$^{1}$, Ming Cui$^{2}$, Feifei Qian$^{1}$,\\
Chaoyang Mei$^{2}$, Fanlin Meng$^{2}$, Lulu Wang$^{1}$,
Ziming Yu$^{1}$, Junxi Yin$^{2}$ \\
$^{1}$Beijing Normal University \quad
$^{2}$Ke Holdings \\
\texttt{xiaoshuyao@mail.bnu.edu.cn}
}

\iclrfinalcopy 
\begin{document}

\maketitle
\lhead{Preprint}
\begin{abstract}
Deep Research agents synthesize evidence into cited reports, yet a well-cited report can still reach a misleading conclusion. Citation correctness checks whether cited sources support individual claims. It does not show whether adaptive search exposed a representative view of all documents made available for evaluation, which we call the candidate pool. Early findings redirect later queries, document choices, and stopping, so the documents an agent reads form a selective sample. Existing evaluations rarely account for this selection. We formulate the problem as adaptive evidence sampling and introduce Causal Evidence Selection Correction (CESS). CESS predicts each candidate document's evidence direction and corrects the candidate-pool average using the logged probabilities of selecting each document and reaching each search round. Shrinkage stabilizes short searches, while intervals replace point estimates when some documents cannot be sampled. We also prove that estimating the average evidence direction of a common pool differs from measuring how a change in search policy alters the evidence read. The latter requires intervention. On  questions from the MS2 systematic-review benchmark, CESS reduces mean absolute error against the candidate-pool average by $9.2\%$ and reduces the estimate's change under opposing document rankings by $39.4\%$ relative to averaging the evidence scores of documents read. Across trajectories from a public Open Deep Research agent, the corresponding reductions reach $60.1\%$ and $87.2\%$. A further 4,800 trajectories under paired interventions confirm that correcting a pool estimate and measuring a policy effect are different tasks. CESS therefore audits whether the evidence direction underlying a report reflects the documents available for evaluation, while a separate intervention analysis measures the effect of search decisions. 
\end{abstract}

\section{Introduction}
\label{sec:introduction}

Deep Research agents use large language models (LLMs) to generate queries, retrieve evidence, and synthesize cited reports for open-ended questions~\citep{du2026deepresearchbench}. Their reliability depends not only on whether each citation supports its local claim, but also on whether the search exposed evidence that could change the overall conclusion. Existing evaluations assess citation support~\citep{gao2023alce}, factual correctness~\citep{min2023factscore}, and report utility~\citep{du2026deepresearchbench}. All three may appear satisfactory even when the agent has assembled a one-sided evidence set.

This risk arises from adaptive evidence acquisition. An early supporting document can lead the agent to issue similar queries, open more supporting results, and stop before finding counterevidence. The report may faithfully summarize the documents read while differing from the direction supported by the available evidence. Report inspection alone cannot determine whether this shift arose from exposure, document selection, or stopping, making evidence selection a distinct reliability problem.

Recent work documents reports whose supported citations nevertheless conflict with broader scientific evidence~\citep{huang-etal-2026-deepfact} and shows how early search directions can reinforce themselves~\citep{zhou2026hyposearch}. These studies improve evidence checking or collection. We address the complementary measurement problem. From the subset an agent opened, how can we estimate the direction supported by a specified candidate pool? How can we separately measure the effect of changing the search policy? We use \emph{evidential conclusion} to mean a scalar summary of whether the documents support one side of a focal claim or the other. It does not refer to the linguistic quality of the report. The \emph{candidate pool} is the prespecified set of documents available for evaluating a question. We call the average evidence direction over this pool the \emph{candidate-pool target}. The corresponding average over only the documents read is the \emph{Opened Mean}. To measure dependence on document order, we define \emph{ranking sensitivity} as the absolute change in an estimate when the same pool is ordered supporting-first rather than opposing-first.

Three features make this measurement difficult. First, earlier findings shape later choices, so opened documents are not a representative sample of the candidate pool. Second, stopping removes later opportunities to observe evidence. Third, a policy may assign zero probability to part of the pool, precluding point identification from the observed trajectory. These are forms of selection bias under adaptive data collection~\citep{russo2016controlling,meng2018statistical}, compounded by sequential queries and stopping decisions.

We address the first question with Causal Evidence Selection Correction (CESS). Given a candidate pool and an aggregation rule, CESS predicts an evidence score for every candidate and corrects the resulting pool prediction using the documents read and their logged probabilities of selection and of reaching each round. When these probabilities are logged and every document can be selected, the raw estimator recovers the candidate-pool target in expectation. Shrinkage stabilizes short trajectories, while an interval replaces the point estimate when part of the pool cannot be sampled.

The second question concerns attribution. If two searches over the same pool are both corrected to the same target, their corrected estimates should agree. Their difference therefore cannot quantify how much the policies changed the evidence opened. We formalize this incompatibility and measure policy effects with paired interventions on selection and stopping. Figure~\ref{fig:overview} shows the search process and the two audit questions. The contributions are as follows.

\begin{figure*}[t]
    \centering
    \includegraphics[width=0.9\textwidth]{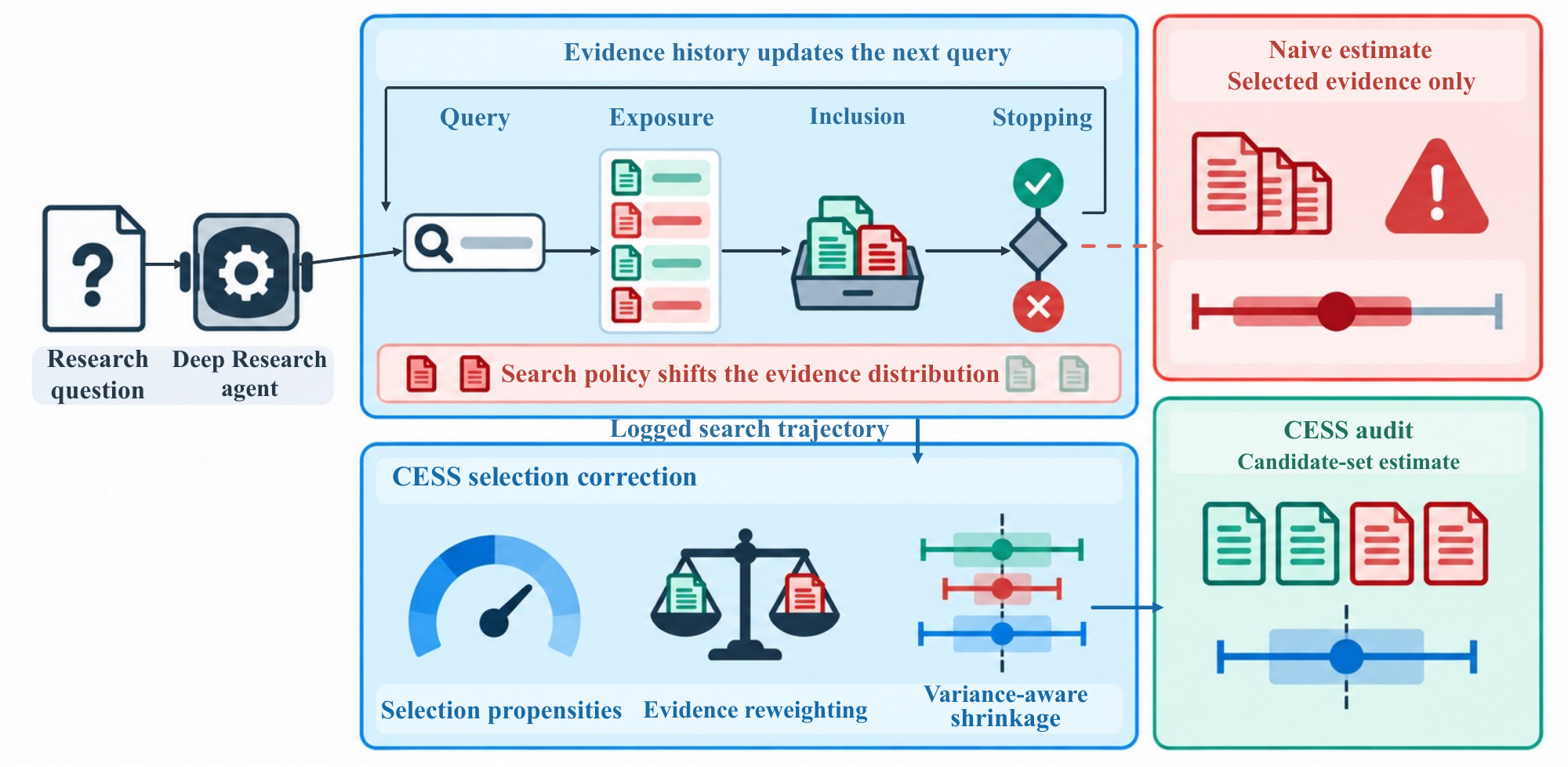}
\caption{How earlier evidence shapes later queries, document selection (``Inclusion'' in the diagram), and stopping. The naive estimate averages the evidence scores of documents read. This is the Opened Mean defined in the text. CESS estimates the candidate-pool target from the logged trajectory, while paired interventions measure how changing search decisions changes the average evidence direction among the documents read.}
    \label{fig:overview}
\end{figure*}

\begin{enumerate}
\item \textbf{A defined target for auditing evidence selection.} We separate the evidential conclusion supported by a candidate pool from the conclusion formed from opened documents, and prove that an estimate designed to recover the same pool target under every search policy cannot also identify the effect of replacing that policy.

\item \textbf{Public-agent transfer.} CESS combines predictions for every candidate with logged probabilities of document selection and of reaching each round, shrinks high-variance corrections from short searches, and reports an interval when some documents cannot be sampled. Across 216 trajectories forming 108 paired comparisons in a public Open Deep Research agent, it reduces mean absolute error against the candidate-pool target by $60.1\%$ and ranking sensitivity by $87.2\%$ relative to the Opened Mean.

\item \textbf{Correction is not attribution.} Across two evidence-synthesis benchmarks and 4,800 LLM-agent trajectories under paired interventions, we show that target correction and search-policy effects answer different questions. CESS estimates the pool target, while interventions measure how selection and stopping change opened evidence.
\end{enumerate}

\section{What Does Each Search Outcome Measure?}
\label{sec:causal_modeling}

\subsection{Candidate-Pool Target and Search Process}

For a research question $x$, let $\mathcal D(x)=\{d_1,\ldots,d_N\}$ be a prespecified candidate pool and $Y_i\in[-1,1]$ the evidence score of document $d_i$. Positive and negative scores indicate support for opposite directions of the focal claim or outcome, while the magnitude reflects the dataset-provided evidence strength. The pool target is the equally weighted evidential conclusion
\begin{equation}
\theta^{*}=\frac{1}{N}\sum_{i=1}^{N}Y_i.
\label{eq:target_estimand}
\end{equation}
Appendix~\ref{app:evidence_scores} gives the dataset-specific score construction for MS2 and PERSPECTRUM. An agent observes only a trajectory of selected documents, whose distribution depends on its history. We summarize this feedback with the sequential structural model~\citep{pearl1995causal}
\begin{equation}
\begin{aligned}
Q_t &= f_Q(H_t,U_t^Q), \quad E_t = f_E(Q_t,G_t,\mathcal{D},U_t^E), \quad A_t = f_A(H_t,Q_t,E_t,U_t^A),\\
H_{t+1} &= f_H(H_t,Q_t,E_t,A_t,U_t^H), \quad C_t = f_C(H_{t+1},U_t^C).
\end{aligned}
\label{eq:sequential_scm}
\end{equation}
Here $H_t$ is the search history, $Q_t$ the query, $G_t$ the ranking condition, $E_t$ the exposed set, $A_t$ the selected document, and $C_t$ the stopping decision. Thus early evidence changes both the current opened-evidence conclusion and future queries, selection probabilities, and stopping.

The target in Eq.~\ref{eq:target_estimand} asks whether a search trajectory represents a candidate pool fixed before evaluation. It is an audit target for that pool, not a claim about the entire open web or the ultimate truth of the research question. Fixing the documents and aggregation rule makes acquisition bias auditable across ranking and stopping policies. Equal weighting is our benchmark convention. The framework also admits prespecified quality or source-deduplication weights.

\subsection{How Search Policies Change Conclusions}
\label{sec:explanation_target}

Let $O(g)=\mathbb E[\widehat\theta_{\mathrm{opened}}(g)\mid\mathcal D]$ denote the expected Opened Mean under search policy $g$, and let $\tau_O(g,g')=O(g)-O(g')$. Unlike $\theta^*$, $\tau_O$ measures how replacing the search policy changes the evidential conclusion formed from opened documents. We measure this effect with a $2\times2$ intervention crossing uniform versus agent document selection ($\pi_0,\pi_1$) with fixed versus adaptive stopping ($\sigma_0,\sigma_1$):
\begin{equation}
\theta_{ab}=\mathbb E[\widehat\theta_{\mathrm{opened}}\mid
do(\pi=\pi_a),do(\sigma=\sigma_b)],\qquad a,b\in\{0,1\},
\label{eq:factorial_targets}
\end{equation}
\begin{equation}
\Delta_{\mathrm{sel}}=\theta_{10}-\theta_{00},\quad
\Delta_{\mathrm{stop}}=\theta_{01}-\theta_{00},\quad
\Delta_{\mathrm{int}}=\theta_{11}-\theta_{10}-\theta_{01}+\theta_{00}.
\label{eq:mechanism_decomposition}
\end{equation}
These effects sum to $\theta_{11}-\theta_{00}$ by construction. They are defined by the interventions, not by CESS. We estimate them from paired runs.

\paragraph{Three questions, three quantities.} $\theta^*$ asks what conclusion the prespecified evidence pool supports under the chosen aggregation rule. $O(g)$ asks what conclusion a typical execution of policy $g$ forms from the evidence it opens. $\tau_O(g,g')$ asks how that opened-evidence conclusion changes when $g'$ is replaced by $g$. CESS targets the first quantity. Repeated executions describe the second. Paired interventions or full-trajectory off-policy evaluation are needed for the third. Keeping these questions separate prevents an accurate audit estimate from being misread as a causal explanation of the agent.

For example, suppose supporting-first and opposing-first rankings search the same candidate pool. A large gap between the average scores of documents read under each ranking is a real ranking-policy effect. If CESS corrects both search runs toward the same $\theta^*$, the corrected gap should approach zero. That invariance is evidence of successful target correction, but it cannot show that ranking had little effect on what the agent actually observed. Indeed, using the corrected gap as the policy-effect estimate would erase the effect by construction.

\section{Causal Evidence Selection Correction}
\label{sec:method}

Causal Evidence Selection Correction (CESS) estimates $\theta^*$ from a known candidate pool by combining prediction with probability correction. For each document, $X_i$ contains features available before the evaluated trajectory, and the outcome model $m_\phi(X_i)$ predicts its evidence score $Y_i$. These predictions cover documents the agent never opens. For an opened document, the observed prediction error is then reweighted by its logged probability of selection and of reaching that round. The prediction is fixed before search, and the estimator observes $Y_i$ only if the agent opens $d_i$. The term \emph{causal} refers to the explicitly modeled acquisition process and intervention-defined policy effects. CESS itself targets the policy-invariant pool conclusion. This is a finite-pool estimation problem related to active testing~\citep{kossen2021activetesting}.

\subsection{Sequential Correction}

Let $K$ be the maximum budget, $T\leq K$ the observed rounds, $J_t$ the document selected at round $t$, and $S_t=1$ the event that round $t$ is reached. With history $H_t$, query $Q_t$, ranking $G_t$, and pre-selection stopping probability $c_t$, define
\begin{equation}
p_t=P\!\left(J_t=j_t\mid H_t,Q_t,G_t,S_t=1\right),\qquad
s_t=\prod_{k=1}^{t}(1-c_k),\qquad c_1=0.
\label{eq:selection_and_continuation}
\end{equation}
Here $j_t$ is the realized value of $J_t$, $s_t$ is the probability of reaching round $t$ along the observed history, and $s_t=1$ for fixed-length search. The logged values used by the estimator are denoted $\widehat p_t$ and $\widehat s_t$. Let $\overline m=N^{-1}\sum_i m_\phi(X_i)$ be the full-pool outcome-model prediction. The raw estimator is
\begin{equation}
\widehat{\theta}_{\mathrm{DR}}^{\mathrm{raw}}
=\overline{m}+\frac{1}{K}\sum_{t=1}^{T}
\frac{Y_{J_t}-m_{\phi}(X_{J_t})}{N\widehat{p}_t\widehat{s}_t}.
\label{eq:sequential_dr}
\end{equation}
In Eq.~\ref{eq:sequential_dr}, the numerator is the observed prediction error for the selected document. The factors $1/\widehat p_t$ and $1/\widehat s_t$ correct, respectively, selective document choice and the loss of later observations through stopping. The full-pool prediction $\overline m$ covers unopened documents and reduces residual variance. This construction follows augmented inverse-probability estimation~\citep{cassel1976difference,robins1994regression,dudik2011doubly}.

\begin{assumption}[Logged sequential design and overlap]
The outcome prediction for every candidate is fixed before the evaluated trajectory. The logged selection and continuation probabilities equal those used by the controller. At every reachable history, each candidate document has positive selection probability, and every evaluated round has positive probability of being reached.
\end{assumption}

\begin{theorem}[Design unbiasedness]
\label{thm:design_unbiasedness}
Under the logged-design and overlap assumption, conditional on the finite candidate pool,
\begin{equation}
\mathbb{E}\!\left[\widehat{\theta}_{\mathrm{DR}}^{\mathrm{raw}}\mid\mathcal D\right]=\theta^*.
\end{equation}
\end{theorem}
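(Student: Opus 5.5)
The plan is a round-by-round telescoping argument: rewrite the random-length sum in Eq.~\ref{eq:sequential_dr} as a fixed-length sum over $t=1,\ldots,K$ with reach indicators, then evaluate each term by iterated conditioning. The first conditioning step removes the selection weight $1/\widehat p_t$. The second removes the continuation weight $1/\widehat s_t$. Write $r_i=Y_i-m_\phi(X_i)$ for the residual of candidate $i$. By the first clause of the assumption, $m_\phi$ is fixed before the trajectory, so conditional on $\mathcal D$ the vector $(r_1,\ldots,r_N)$ is deterministic. Since $\overline m$ is likewise deterministic and $\theta^*=\overline m+N^{-1}\sum_i r_i$, it suffices to show
\[
\mathbb E\Bigl[\textstyle\sum_{t=1}^{K} S_t\,\frac{r_{J_t}}{N\widehat p_t\widehat s_t}\Bigm|\mathcal D\Bigr]=\frac{K}{N}\sum_{i=1}^N r_i .
\]
Here $S_t$ is the indicator that round $t$ is reached, and the convention is that terms with $S_t=0$ vanish. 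With this convention the sum up to $T$ equals the sum up to $K$.

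\textbf{Step 1 (selection).} Fix $t$ and condition on $\mathcal F_t=(H_t,Q_t,G_t)$ on the event $S_t=1$. By the logged-design assumption $\widehat p_t=P(J_t=j_t\mid\mathcal F_t,S_t=1)$. Also $\widehat s_t$ is a function of the history through round $t$, computed from pre-selection stopping probabilities, so it is $\mathcal F_t$-measurable. Hence
\[
\mathbb E\Bigl[\tfrac{r_{J_t}}{\widehat p_t}\Bigm|\mathcal F_t,S_t=1\Bigr]=\sum_{j:\,P(J_t=j\mid\cdot)>0}P(J_t=j\mid\cdot)\,\frac{r_j}{P(J_t=j\mid\cdot)}=\sum_{i=1}^N r_i .
\]
The last equality uses overlap: every candidate has positive selection probability at every reachable history, so no index is dropped. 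The $t$-th term therefore has conditional expectation $N^{-1}\sum_i r_i\cdot S_t/\widehat s_t$.

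\textbf{Step 2 (stopping).} It remains to show $\mathbb E[S_t/s_t\mid\mathcal D]=1$ for each $t\le K$. Define $W_t=S_t/s_t$, so that $W_1=1$ because $c_1=0$. Round $t$ is reached iff round $t-1$ was reached and the controller continued. Given the history up to the round-$t$ stopping decision, the continuation probability is $1-c_t$, and by the assumption this equals the logged value. Therefore
\[
\mathbb E[W_t\mid\text{history before the decision}]=\frac{S_{t-1}}{s_{t-1}}\cdot\frac{1-c_t}{1-c_t}=W_{t-1},
\]
with positivity guaranteeing $1-c_t>0$ on reachable histories. This makes $(W_t)$ a mean-one martingale, and the tower property gives $\mathbb E[W_t]=1$. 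Combining the two steps, each of the $K$ terms contributes $N^{-1}\sum_i r_i$. Dividing by $K$ and adding $\overline m$ yields $\theta^*$.

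\textbf{Main obstacle.} I expect the main obstacle to be Step 2's bookkeeping of filtrations. I must verify that $c_t$ is evaluated on the same history as the indicator it reweights. I must also verify that $\widehat s_t$, though random and trajectory-dependent, is measurable before $J_t$ is drawn, so that it can be pulled out in Step 1. Otherwise the two weights do not factor, and the martingale argument fails. I would first fix a precise ordering within each round: stopping decision, then query, then exposure, then selection. I would then check that Eq.~\ref{eq:sequential_scm} is consistent with this ordering, where $C_t$ is determined by $H_{t+1}$ and therefore precedes round $t+1$. A secondary point is that overlap must hold at every reachable history, for example with-replacement sampling or a softmax over the full pool. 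A without-replacement policy would violate it, which is exactly the partial-overlap case that the paper later handles with intervals.
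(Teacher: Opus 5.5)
Your proof is correct and follows the same route as the paper's. Inverse selection weighting gives each reached round's weighted residual conditional mean $N^{-1}\sum_i r_i$, inverse continuation weighting gives $\mathbb E[S_t/s_t\mid\mathcal D]=1$, and averaging over the $K$ potential rounds and adding $\overline m$ yields $\theta^*$. Your martingale argument for $W_t$ and your check that $\widehat s_t$ is fixed before $J_t$ is drawn make explicit what the paper states in one line.

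One small correction to your closing aside: without-replacement sampling breaks per-round overlap, but every document still has positive acquisition probability. So $\rho=0$, and the paper's support interval would collapse to a point rather than account for the resulting bias.
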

The proof is in Appendix~\ref{app:proofs}. Correct design probabilities and overlap are sufficient even when the outcome model is misspecified. We evaluate estimated probabilities and the practical clipping operation $\Pi_{[-1,1]}(\widehat\theta_{\mathrm{DR}}^{\mathrm{raw}})$ empirically~\citep{hadad2021adaptive,cook2024adaptive}.

The correction has a round-wise interpretation. Conditional on reaching round $t$, inverse selection weighting makes the observed residual represent the mean residual over the candidate pool. Inverse continuation weighting restores the contribution of trajectories censored before that round, and $1/K$ averages over the $K$ potential rounds. Adding $\overline m$ therefore recovers the pool target in expectation. The guarantee rests on two auditable conditions. Controller probabilities are logged, and every candidate document and evaluated round has positive probability.

\subsection{Why Correction Is Not Policy Attribution}
\label{sec:diagnostic_guarantees}

\begin{theorem}[Invariance--attribution incompatibility]
\label{thm:invariance_attribution}
Call a policy \emph{supported} if it satisfies the overlap conditions in Assumption~1. Let $\widetilde\theta(g)$ be any estimator satisfying $\mathbb E[\widetilde\theta(g)\mid\mathcal D]=\theta^*$ for every supported policy $g$. Then, for any two supported policies $g$ and $g'$,
\begin{equation}
\mathbb E[\widetilde\theta(g)-\widetilde\theta(g')\mid\mathcal D]=0.
\end{equation}
\end{theorem}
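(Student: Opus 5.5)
The statement follows from linearity of conditional expectation applied to the defining unbiasedness property; no structural features of the search process beyond the hypothesis are needed. I would fix the finite candidate pool $\mathcal D$, so that $\theta^*=N^{-1}\sum_i Y_i$ is a fixed number determined by $\mathcal D$ and the aggregation rule, and does not depend on the policy. Taking $g$ and $g'$ supported, the hypothesis gives $\mathbb E[\widetilde\theta(g)\mid\mathcal D]=\theta^*$ and $\mathbb E[\widetilde\theta(g')\mid\mathcal D]=\theta^*$.

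The first step is to make sense of the difference $\widetilde\theta(g)-\widetilde\theta(g')$. The two estimators are computed from trajectories generated under different policies. These trajectories may be run independently, or coupled as in the paired-run design of Section~\ref{sec:explanation_target}.

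In either case, I would regard both as random variables on a common probability space, for instance a product of the two runs or a shared-seed coupling. Under that space, each marginal conditional distribution given $\mathcal D$ is the one induced by the respective policy through the sequential model in Eq.~\ref{eq:sequential_scm}. Since the estimators are clipped to or bounded on $[-1,1]$ in practice, or at least assumed integrable by the hypothesis that their expectations exist, the difference is integrable.

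The second step is then linearity:
\[
\mathbb E[\widetilde\theta(g)-\widetilde\theta(g')\mid\mathcal D]
=\mathbb E[\widetilde\theta(g)\mid\mathcal D]-\mathbb E[\widetilde\theta(g')\mid\mathcal D]
=\theta^*-\theta^*=0 .
\]
This holds regardless of the dependence between the two runs, because expectation of a difference depends only on the marginals. As a remark, I would note that Theorem~\ref{thm:design_unbiasedness} shows the class of such estimators is nonempty: $\widehat\theta_{\mathrm{DR}}^{\mathrm{raw}}$ belongs to it. The conclusion therefore applies to CESS itself. In contrast, $\tau_O(g,g')=O(g)-O(g')$ can be nonzero, so no such estimator's expected difference can identify $\tau_O$.

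The only real obstacle is the measure-theoretic bookkeeping in the first step. I would handle it by stating explicitly that conditional expectations are taken under each policy's induced law given $\mathcal D$, and that any coupling preserves those marginals. With that in place, the argument is immediate.
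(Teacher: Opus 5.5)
Your proof is correct and matches the paper's: it applies linearity of conditional expectation to get $\mathbb E[\widetilde\theta(g)-\widetilde\theta(g')\mid\mathcal D]=\theta^*-\theta^*=0$, then observes that this contrast equals $\tau_O(g,g')$ only when $\tau_O(g,g')=0$, and your coupling and integrability bookkeeping simply makes explicit what the paper leaves implicit. One caution on your closing remark: Theorem~\ref{thm:design_unbiasedness} does place $\widehat\theta_{\mathrm{DR}}^{\mathrm{raw}}$ in this class, but the clipped CESS estimator is not literally covered, because the paper notes that clipping can reintroduce finite-sample dependence on the policy.
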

Consequently, a contrast of policy-invariant corrected targets identifies $\tau_O(g,g')$ only when $\tau_O(g,g')=0$.

Invariance is desirable for estimating $\theta^*$, but it is incompatible with recovering a nonzero opened-evidence effect. Moreover, when selection changes future queries, candidate-selection and continuation probabilities omit part of the trajectory likelihood ratio. Policy attribution therefore requires direct interventions or a full off-policy estimator covering query generation, retrieval, selection, and stopping.

\subsection{Finite-Budget Shrinkage}

The unshrunk sequential correction can vary greatly when only a few documents are observed. CESS stabilizes it by shrinking toward the full-pool prediction, following the bias--variance logic of doubly robust off-policy evaluation~\citep{su2020shrinkage}. The coefficient \(\lambda\in[0,1]\) controls how much of the probability correction is retained, and $\Pi_{[-1,1]}$ clips its argument to the valid score range:
\begin{equation}
\widehat{\theta}_{\mathrm{CESS}}
=\Pi_{[-1,1]}\!\left[\overline{m}+\lambda
\left(\widehat{\theta}_{\mathrm{DR}}^{\mathrm{raw}}-\overline{m}\right)\right].
\label{eq:anchored_cess}
\end{equation}
At $\lambda=0$, CESS uses the full-pool prediction. At $\lambda=1$, it uses the unshrunk sequential correction. Intermediate values trade prediction error against correction variance. Equation~\ref{eq:anchored_cess} defines the primary MS2 estimator. Every CESS variant retains the same logged sequential correction and differs only in its shrinkage anchor, coefficient, or clipping order. Appendix~\ref{app:cess_configurations} maps each experiment to its exact formula.

MS2 uses coefficients $0.5484$ for Qwen2.5-32B and $0.5649$ for OLMo3-7B. \emph{Task-wise CESS} is a variant that selects a separate coefficient for each question using only information available before document scores are observed. In the public-agent and ten-round PERSPECTRUM analyses, we first clip the raw correction to the valid score range and then shrink it toward the full-pool prediction with $\lambda=0.5$. The five-round PERSPECTRUM analysis instead shrinks from the Opened Mean. All settings are fixed before evaluation.

\subsection{Limited Support: An Interval Instead of a Point Estimate}
\label{sec:support_failure}

Let $\mathcal D_+$ contain the candidates with positive acquisition probability under the logged design, and let $\rho=1-|\mathcal D_+|/N$ be the inaccessible target mass for the equally weighted target. If $Y_i\in[L,U]$ and $\theta_+$ is the mean over reachable candidates, then
\begin{equation}
\theta^*\in\left[(1-\rho)\theta_++\rho L,\;(1-\rho)\theta_++\rho U\right].
\label{eq:support_bounds}
\end{equation}
These bounds are sharp without additional assumptions about inaccessible outcomes. Their width is $\rho(U-L)$. For our $[-1,1]$ score it is $2\rho$. When the reachable set changes, we propagate the bounds to the target estimate and report the resulting interval. If the inaccessible mass is unknown, $\rho$ can instead be varied in a sensitivity analysis.

\section{Experiments}
\label{sec:experiments}

\paragraph{Experimental plan.} Our experiments answer three questions. First, does the logged probability correction recover the candidate-pool target (RQ1)? Second, after finite-budget stabilization, does CESS improve accuracy while reducing dependence on document order (RQ2)? Third, do CESS contrasts and paired interventions behave differently, as predicted by Theorem~\ref{thm:invariance_attribution} (RQ3)? Appendix~\ref{app:experimental_details} gives score construction, estimator configurations, data separation, and implementation details.

The main target-recovery experiment uses 200 MS2~\citep{deyoung2021ms2} test questions containing 3,169 candidate studies. Qwen2.5-32B-Instruct~\citep{qwen2024qwen25} and OLMo3-7B-Instruct~\citep{olmo2025olmo3} each search for four rounds under three document orders: the original ranking, supporting evidence first, and opposing evidence first. This design produces 1,200 trajectories. We additionally evaluate 216 four-search trajectories from a public agent on 36 PERSPECTRUM tasks. The appendix reports a complementary five-round PERSPECTRUM evaluation with the same 36 tasks and two search controllers (Appendix~\ref{sec:generalization}). A separate $2\times2$ selection--stopping experiment contributes 4,800 trajectories, organized as 1,200 paired blocks containing all four intervention conditions.

All method settings are fixed without using evidence scores from the test questions. Confidence intervals resample whole questions and keep every associated trajectory together. This prevents multiple runs of the same question from being treated as independent observations.

The uncorrected \method{Opened Mean} averages evidence scores only over opened documents. Mean absolute error (MAE) measures deviation from the pool target in Eq.~\ref{eq:target_estimand}. Ranking sensitivity is the absolute change in an estimate between supporting-first and opposing-first rankings of the same candidate pool. Paired intervention effects answer a different question: they measure the signed change in Opened Mean when a selection or stopping policy is replaced. We report accuracy and ranking sensitivity together because a constant estimator can be perfectly stable yet inaccurate.

\paragraph{Identification and support diagnostics.} Controlled tests confirm that the logged selection and continuation probabilities remove the corresponding bias in the unshrunk estimator. Appendix~\ref{sec:controlled_mechanisms} further evaluates limited support and perturbations to predictions or probabilities.

\subsection{Target Recovery and the Accuracy--Stability Trade-off}
\paragraph{Ranking intervention.}\label{sec:ranking_intervention} We keep the candidate documents fixed and change only their order. Supporting evidence appears first in one condition and opposing evidence appears first in the other. Ranking sensitivity is the absolute difference between the two resulting estimates. The simulation uses 30 documents and at most four search rounds. Appendix~\ref{app:ranking_simulation} gives the full design.

In the simulation, the unshrunk correction underlying CESS reduces the difference between the two rankings from 0.7478 to 0.0014, a 99.8\% reduction. On MS2 review questions, Task-wise CESS chooses its shrinkage coefficient separately for each question and reduces that difference from 0.0415 to 0.0185 with Qwen2.5-32B and from 0.0391 to 0.0076 with OLMo3-7B, reductions of 55.4\% and 80.6\%, respectively.

Figure~\ref{fig:search_bias} shows both results. The simulated panel isolates the unshrunk probability correction, while the MS2 panel evaluates stabilized Task-wise CESS on real review questions. In both panels, the candidate evidence is identical across rankings. Shrinkage alone can also make an estimator less sensitive to ranking, so the next comparison holds accuracy constant.

\begin{figure}[t]
    \centering
    \includegraphics[width=0.75\linewidth]{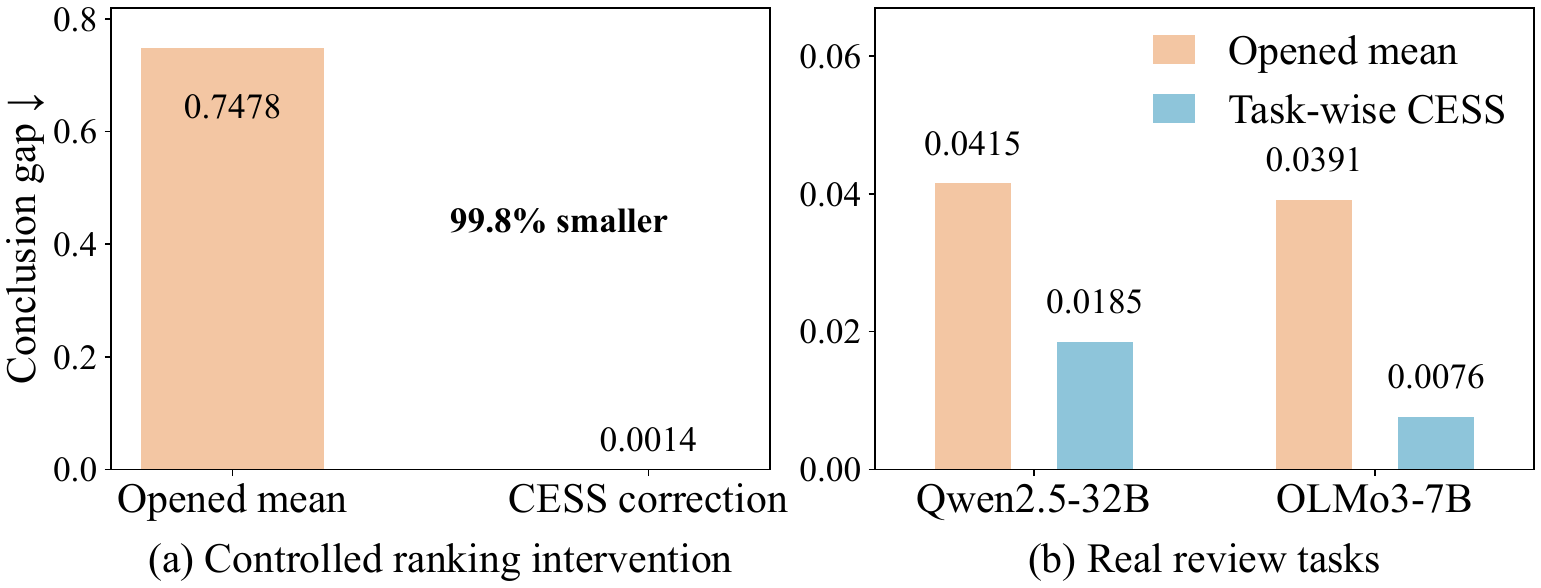}
\caption{Ranking sensitivity with the candidate evidence held fixed. \textbf{(a)} In a controlled simulation, the unshrunk sequential correction reduces sensitivity from 0.7478 to 0.0014 before shrinkage and clipping. \textbf{(b)} On MS2 review questions, \method{Task-wise CESS} reduces sensitivity for both query models. Lower is better.}
    \label{fig:search_bias}
\end{figure}

\paragraph{Pool-target recovery.}\label{sec:estimator_comparison} We compare prediction-only estimates, the uncorrected average over read documents, sequential probability corrections, and CESS. The ``Tuning-set Mean'' assigns every test question the same average target from the tuning questions, whereas ``Outcome Regression'' (OR) predicts every candidate document before search and averages the predictions over the full pool. ``Opened Mean'' uses only the documents read by the agent. Sequential IPW corrects each opened document using its selection and continuation probabilities, while Sequential Doubly Robust adds a probability-weighted residual correction to the OR prediction. CESS stabilizes this sequential correction through shrinkage fixed before test outcomes are observed. All methods use the same tuning questions and fixed test-time settings. We average results within each question and then weight questions equally. MAE measures target error. Ranking sensitivity measures how much an estimate changes when the same evidence pool is reordered. Table~\ref{tab:estimator_comparison} reports the resulting accuracy and ranking-sensitivity comparison.

\begin{table}[t]
    \centering
    \small
    \setlength{\tabcolsep}{7pt}
\caption{Accuracy and sensitivity to document order on MS2, averaged across the two query models. Shrinkage settings are chosen on separate tuning questions and kept unchanged on the test questions. Worst-ranking MAE is error under the less favorable of the two rankings. Lower is better.}
    \label{tab:estimator_comparison}
    \begin{tabular}{lccc}
        \toprule
        Method
        & MAE
        & Ranking sensitivity
        & Worst-ranking MAE \\
        \midrule
        Tuning-set Mean & 0.2733 & 0.0000 & 0.2733 \\
        Outcome Regression & 0.2571 & 0.0000 & 0.2571 \\
        Opened Mean & 0.1743 & 0.2445 & 0.2522 \\
        Sequential IPW & 0.1883 & 0.2573 & 0.2727 \\
        Sequential Doubly Robust & 0.1893 & 0.2626 & 0.2763 \\
        \methodname{CESS} & \textbf{0.1583} & \textbf{0.1481} & \textbf{0.2165} \\
        \bottomrule
    \end{tabular}
\end{table}

CESS achieves the best overall results among estimators that use observed opened-document outcomes, attaining the lowest MAE, ranking sensitivity, and worst-ranking MAE. Relative to Opened Mean, CESS reduces MAE by $9.2\%$, ranking sensitivity by $39.4\%$, and worst-ranking MAE by $14.2\%$. Compared with Sequential Doubly Robust, it reduces these three metrics by $16.4\%$, $43.6\%$, and $21.6\%$, respectively. Prediction-only baselines have zero ranking sensitivity because they ignore the realized search trajectory, whereas CESS combines the observed evidence with sequential probability correction to achieve both accurate pool-target recovery and stable conclusions under different document orders.

\paragraph{Comparing methods at equal accuracy.}\label{sec:pareto_frontier} Stronger shrinkage can make any method appear more stable by moving it closer to a fixed prediction. We therefore sweep the same coefficient range for CESS and a simple shrinkage baseline. We then compare ranking sensitivity at the same MAE, using interpolation only when neighboring settings bracket the target value. Question-level bootstrap resampling repeats this matching procedure to quantify uncertainty.

\begin{table}[t]
\centering
\small
\setlength{\tabcolsep}{3.2pt}
\caption{Ranking sensitivity at matched MAE across 28 dataset--model--budget--prediction settings. Each count records the direction of the paired comparison using a 95\% question-level bootstrap interval. No multiplicity adjustment is applied.}
\label{tab:matched_frontier}
\begin{tabular}{lccc}

\toprule
Comparison & \shortstack{CESS lower\\sensitivity} & \shortstack{No detected\\difference} & \shortstack{CESS higher\\sensitivity} \\
\midrule
At matched MAE & 10 & 18 & 0 \\
\bottomrule
\end{tabular}
\end{table}

Table~\ref{tab:matched_frontier} summarizes the matched-MAE comparison. The 95\% interval favors lower ranking sensitivity for CESS in 10 of 28 comparisons. The other 18 intervals include zero, and none favors higher sensitivity for CESS. The gains concentrate on PERSPECTRUM, where ranking more strongly changes the evidence opened. This comparison isolates the benefit of probability correction from the generic stabilizing effect of shrinkage. Appendix~\ref{app:pareto_robustness} reports the reciprocal matched-sensitivity comparison and perturbation tests.

\paragraph{Limited budgets.}\label{sec:budget_efficiency} This version of CESS chooses its shrinkage coefficient separately for each question using information available before its document values are observed. After one search round, it reduces MAE from $0.3687$ to $0.2157$ for Qwen and from $0.3522$ to $0.2090$ for OLMo. At four rounds, the gains are $9.1\%$ and $15.8\%$. Under adaptive stopping, the gains are $11.9\%$ and $15.6\%$. These improvements compare CESS with using only the documents the agent read. Full curves appear in Figure~\ref{fig:budget_efficiency}.

\begin{figure}[t]
    \centering
    \includegraphics[width=0.7\linewidth]{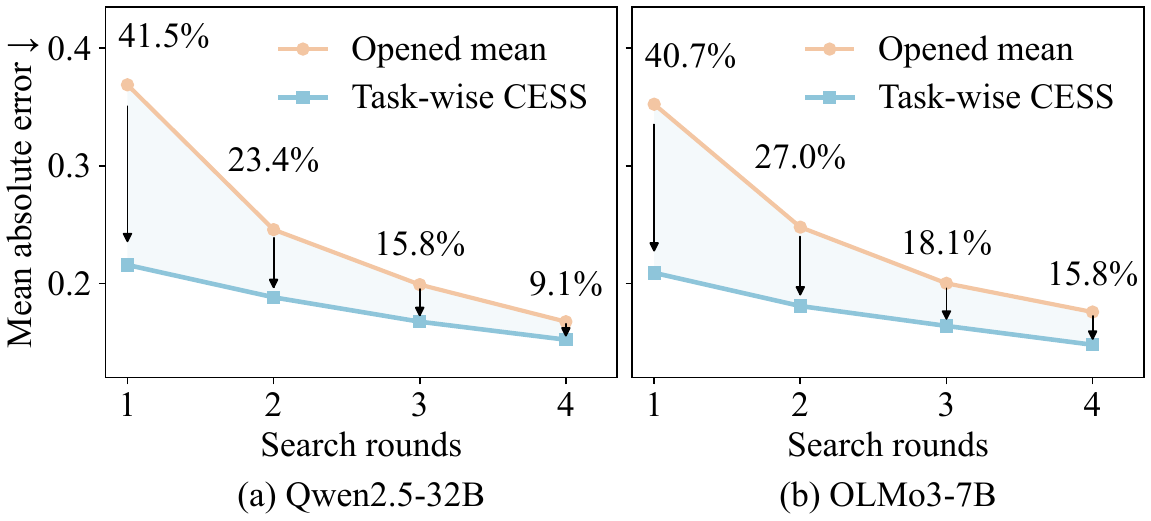}
\caption{MAE across search budgets for \method{Opened Mean} and \method{Task-wise CESS}. Percentages report relative reductions within a round. Gains are largest when only one or two documents can be opened.}
    \label{fig:budget_efficiency}
\end{figure}

\paragraph{Public-agent transfer.}\label{sec:public_agent_transfer} We integrate CESS with the public Open Deep Research agent and leave its planner, adaptive query generator, evidence-use process, and final synthesis unchanged. We replace only its search interface, allowing the agent to search the PERSPECTRUM documents while we record all available documents and the probability of selecting each one. The evaluation contains 36 tasks, two rankings, and three paired seeds, for 216 trajectories of four searches. Within each pair, both branches start from the same pre-intervention state and use the same Gumbel draws. All logged probability vectors are positive and normalized. They exactly reproduce the selected documents. The primary short-budget analysis uses the first $K=2$ selections. Appendix~\ref{app:public_agent} also reports $K=4$. Direction error records whether the estimate and pool target fall into different positive, neutral, or negative categories.

\begin{table}[t]
\centering
\small
\setlength{\tabcolsep}{7pt}
\caption{Transfer to a public Open Deep Research agent on 36 PERSPECTRUM tasks, two evidence rankings, and three seeds (216 trajectories). Results use the primary budget $K=2$. Lower is better.}
\label{tab:public_agent_transfer}
\begin{tabular}{lccc}
\toprule
Method & MAE & Ranking sensitivity & Direction error \\
\midrule
Opened Mean & 0.5727 & 0.5849 & 0.4907 \\
Outcome Regression & 0.2404 & \textbf{0.0000} & 0.3333 \\
Sequential Doubly Robust & 0.3053 & 0.1492 & 0.3287 \\
\methodname{CESS} & \textbf{0.2287} & 0.0746 & \textbf{0.3102} \\
\bottomrule
\end{tabular}
\end{table}

CESS attains the lowest MAE and direction-error rate in Table~\ref{tab:public_agent_transfer}. Among methods that update with opened-document outcomes, it also has the lowest ranking sensitivity. Relative to Opened Mean, CESS reduces MAE by $60.1\%$, ranking sensitivity by $87.2\%$, and direction errors by $18.1$ percentage points. The 95\% CIs for the absolute reductions are $[0.2614,0.4238]$ for MAE and $[0.3560,0.6723]$ for ranking sensitivity. CESS also improves over the unshrunk sequential doubly robust estimator on both metrics, with paired intervals excluding zero. Appendix~\ref{app:public_agent} gives the category threshold and comparisons across search budgets.

The paired branches share their first query, but their later trajectories can diverge after they read different evidence. By round four, $63.9\%$ generate different queries and $75.0\%$ select different document sequences. CESS therefore transfers to trajectories with adaptive query and evidence feedback, rather than only to a fixed sequence of document choices.

\subsection{Why Policy Attribution Requires Intervention}
\label{sec:true_agent_audit}

We test Theorem~\ref{thm:invariance_attribution} on 200 MS2 questions per query model using a $2\times2$ intervention that crosses agent versus uniform document selection with agent-controlled versus fixed ten-round stopping. Each condition is repeated three times with paired seeds, yielding 4,800 LLM-agent trajectories with logged selection and continuation probabilities. Agent-controlled runs average $5.0$--$5.1$ rounds. Policy effects are computed from the four resulting conclusions using Eq.~\ref{eq:mechanism_decomposition}. Table~\ref{tab:true_agent_faithfulness} summarizes the repeatability of these effects and the relationship between intervention effects and CESS contrasts.

\begin{table}[t]
\centering
\small
\setlength{\tabcolsep}{5.5pt}
\caption{Paired LLM search experiment on 200 questions per model. Intraclass correlation (ICC) measures whether the directly observed effects repeat across three runs. Contrast MAE and rank correlation compare the difference between two CESS estimates with the directly observed selection effect. ``Adaptive'' lets later queries respond to selected evidence. ``Fixed'' replays a common query sequence. Higher ICC and rank correlation, and lower MAE, are better.}
\label{tab:true_agent_faithfulness}
\begin{tabular}{lccccc}
\toprule
Model & Sel. ICC & Stop. ICC & \shortstack{Contrast MAE\\(adaptive)} & \shortstack{Rank corr.\\(adaptive)} & \shortstack{Contrast MAE\\(fixed)} \\
\midrule
OLMo3-7B & 0.679 & 0.099 & 0.151 & 0.269 & 0.193 \\
Qwen2.5-32B & 0.685 & 0.102 & 0.146 & 0.236 & 0.197 \\
\bottomrule
\end{tabular}
\end{table}

The directly measured selection effect is moderately repeatable, with ICCs of $0.679$ for OLMo and $0.685$ for Qwen. However, the differences between two CESS estimates have rank correlations of only $0.269$ and $0.236$ with these intervention effects. This is the separation predicted by Theorem~\ref{thm:invariance_attribution}. CESS estimates what the same task-specific candidate pool supports. Paired interventions measure how changing the policy alters the evidence actually opened. They are complementary audit outputs, not interchangeable mechanism scores. Appendix~\ref{app:new_experiments} gives the full design and repeatability analysis.

\section{Related Work}
\label{sec:related_work}

\paragraph{Deep Research agents and report evaluation.} STORM and Co-STORM organize research through multiple perspectives~\citep{shao-etal-2024-assisting,jiang-etal-2024-unknown}. DeepResearcher learns multi-step web search~\citep{zheng-etal-2025-deepresearcher}. HypoSearch explores alternative directions before commitment~\citep{zhou2026hyposearch}. Complementary benchmarks evaluate report quality, citations, factuality, coverage, and logical support~\citep{du2026deepresearchbench,NEURIPS2025_fdcec9f5,huang-etal-2026-deepfact,avraham-etal-2026-dream,zhao-etal-2026-reportlogic,chen-etal-2026-beyond-single}. These works improve evidence acquisition or assess the completed report. They do not estimate a common candidate-pool conclusion from one adaptively selected trajectory. Accurate citations can coexist with an unrepresentative evidence sample.

\paragraph{Adaptive estimation and our gap.} Inverse-probability and doubly robust estimators correct selective observation when acquisition probabilities are available~\citep{cassel1976difference,robins1994regression,dudik2011doubly}. Active testing similarly estimates a target under adaptive label acquisition~\citep{kossen2021activetesting}. Deep Research is harder because earlier evidence changes later queries, document selection, and stopping. It also raises two different audit questions: what does the candidate evidence support, and what did the search policy change? CESS adapts probability correction to the logged search process, stabilizes it for short budgets, reports bounds when some documents have zero probability, and uses interventions for policy attribution.

\section{Conclusion}

We formulate Deep Research search as adaptive evidence sampling and separate two audit goals. The first is to estimate the conclusion supported by a specified candidate pool. The second is to measure how a search policy changes the evidence opened. CESS addresses the first goal by combining predictions for the full pool with logged probabilities of document selection and of reaching each round. It shrinks high-variance corrections from short searches and reports an interval when some documents cannot be sampled. Its gains in estimating the candidate-pool target are largest under short budgets. In a public Open Deep Research agent, CESS reduces target error by $60.1\%$ and ranking sensitivity by $87.2\%$ relative to the Opened Mean. At matched target error, 10 of 28 intervals favor lower ranking sensitivity for CESS, and none favors higher sensitivity. Our theorem and 4,800 intervention trajectories address the second goal. CESS estimates what the candidate evidence supports, whereas paired interventions measure what the search policy changed.

\subsection*{AI use statement}

In this work, we used generative AI tools only for language polishing, including improving grammar, clarity, and fluency of the manuscript. We have not used generative AI tools for generating research ideas, developing the methodology, designing experiments, analyzing results, or drawing scientific conclusions. Other disclosure categories are not applicable to this work. We reviewed all AI-assisted edits and verified that they did not alter the technical content or scientific claims. We take full responsibility for the final content of this work, including all text, claims, and artifacts.

\subsection*{Ethics Statement}
All experiments are conducted in controlled benchmark environments and do not involve human subjects or private user data. We do not identify additional ethical risks beyond those commonly associated with LLM agent research.

\subsection*{Reproducibility Statement}
The paper specifies the main causal variables, interventions, measurements, models, benchmarks, coefficient choices, and experimental procedures. The public-agent experiment logs a complete candidate probability vector at every search round and stores the paired pre-intervention state and common random numbers needed for computational replay. 

\bibliography{iclr2027_conference}
\bibliographystyle{iclr2027_conference}

\clearpage
\appendix

\section{Evidence Construction and Experimental Details}
\label{app:experimental_details}

\subsection{Construction of Document Evidence Scores}
\label{app:evidence_scores}

\paragraph{MS2.} Each review task specifies a target intervention--outcome pair. For each candidate study, we select the matching significance record supplied with the data. If more than one record matches, we use the one with the highest evidence-sentence score, breaking ties by the normalized evidence text. Let \(p_i^{+}\), \(p_i^{0}\), and \(p_i^{-}\) denote the supplied probabilities of a significantly increased outcome, no significant difference, and a significantly decreased outcome, respectively. The evidence value is
\(Y_i=p_i^{+}-p_i^{-}\). The implementation checks that the probabilities lie in \([0,1]\) and sum to within \(0.02\) of one, then normalizes their sum when necessary. A positive score indicates an increase in the target outcome, and a negative score indicates a decrease. These scores encode reported outcome direction rather than effect magnitude or clinical benefit. A zero score can arise from equal probabilities of increase and decrease as well as from a high probability of no significant difference.

\paragraph{PERSPECTRUM.} Each task corresponds to a claim. We assign \(+1\) to a SUPPORT label and \(-1\) to an UNDERMINE label. An evidence document may be linked to several labeled perspectives for the same claim. Let \(\mathcal{A}_i\) contain these perspective--evidence associations and let \(\ell_a\in\{-1,+1\}\) be the corresponding stance label. We compute
\begin{equation}
Y_i=\frac{1}{|\mathcal{A}_i|}\sum_{a\in\mathcal{A}_i}\ell_a.
\end{equation}
Thus, a document linked exclusively to supporting perspectives receives \(+1\), one linked exclusively to opposing perspectives receives \(-1\), and mixed associations yield intermediate values. Each candidate document contributes once to the candidate-pool target, irrespective of its number of associations.

\paragraph{Reference conclusion.} For both datasets, the reference is \(\theta^{*}=N^{-1}\sum_{i=1}^{N}Y_i\), with equal weight for each candidate document. Document scores are constructed from the supplied dataset fields before trajectory evaluation. The resulting reference measures the aggregate direction encoded by the candidate pool and provides a common target for all estimators within a task.

\subsection{Observation Units and CESS Configurations}
\label{app:cess_configurations}

A task defines one research question and its candidate pool. A trajectory records the sequence of document selections under a specified search configuration. Each selection round contributes one observation, and a document selected repeatedly contributes at each selection. Accordingly, a trajectory with \(T\) rounds contains \(T\) observations but may contain fewer than \(T\) distinct documents. The uncorrected baseline is \(\widehat{\theta}_{\mathrm{naive}}=T^{-1}\sum_{t=1}^{T}Y_{J_t}\).

The main MS2 comparison contains 200 tasks and three ranking conditions for each of two query-generation models, giving \(200\times3\times2=1{,}200\) trajectories. Each trajectory contains four selection rounds. These are repeated search evaluations of 200 tasks, rather than 1,200 distinct research questions. The MS2 ranking, budget, and stopping analyses use a separate set of 200 tasks.

Let \(\overline{m}\) denote the mean outcome prediction over the candidate pool and let \(\widehat{\theta}_{\mathrm{DR}}^{\mathrm{raw}}\) denote the sequential estimator before clipping to \([-1,1]\). All configurations use the probability correction in Eq.~\ref{eq:sequential_dr}. They differ only in the estimate toward which they shrink (the anchor), the shrinkage coefficient, or the order of shrinkage and clipping. The primary MS2 configuration is Eq.~\ref{eq:anchored_cess}, with one coefficient per query model: \(0.5484\) for Qwen2.5-32B and \(0.5649\) for OLMo3-7B. Task-wise CESS uses the same formula but predicts one coefficient per task before observing its document scores. It is used in the MS2 ranking, budget, and adaptive-stopping analyses.

The five-round PERSPECTRUM evaluation in Appendix~\ref{sec:generalization} starts from a different anchor:
\begin{equation}
\widehat{\theta}_{\mathrm{CESS}}=\Pi_{[-1,1]}\!\left[
\widehat{\theta}_{\mathrm{naive}}+0.20
(\widehat{\theta}_{\mathrm{DR}}-\widehat{\theta}_{\mathrm{naive}})\right],
\end{equation}
where \(\widehat{\theta}_{\mathrm{DR}}=\Pi_{[-1,1]}[\widehat{\theta}_{\mathrm{DR}}^{\mathrm{raw}}]\). This version moves the Opened Mean toward the clipped correction. Its coefficient was chosen before evaluating the 36 topics.

We choose each query model's shared coefficient using separate tuning runs and then keep it fixed during evaluation. The task-wise coefficient uses only information available before the agent reads the documents to predict how variable the correction will be. Neither procedure uses evidence scores from the evaluated question. The released implementation reports the data split, tuning criterion, and coefficient calculation.

The ten-round PERSPECTRUM experiment and the public-agent experiment start from the full-pool prediction. Both retain the preselected coefficient \(\lambda=0.5\). They first restrict the probability correction to the valid score range and then shrink it toward the prediction:
\begin{equation}
\widehat{\theta}_{\mathrm{CESS}}^{(10)}=\Pi_{[-1,1]}\!\left[
\overline m+0.5\left(\Pi_{[-1,1]}[
\widehat{\theta}_{\mathrm{DR}}^{\mathrm{raw}}]-\overline m\right)\right].
\label{eq:horizon10_cess}
\end{equation}
For the ten-round experiment, Eq.~\ref{eq:sequential_dr} always uses maximum budget \(K=10\), even when a trajectory stops earlier. The public-agent experiment uses the same formula that clips before shrinking for prefixes \(K\in\{1,2,4\}\). There, $s_t=1$ because every trajectory completes four searches. The primary MS2 configuration uses the opposite order: it shrinks first and clips afterward. Recomputing Eq.~\ref{eq:horizon10_cess} from the saved predictions and clipped sequential estimates reproduces all 2,592 ten-round estimates exactly.

The ten-round implementation uses one common predicted value within each question, \(m_\phi(X_i)=\overline{m}\). Appendix~\ref{app:horizon10_details} recalculates both orders of clipping and shrinkage from the saved runs. It separates the reported estimator, whose settings were kept unchanged, from the version that starts with the unshrunk probability correction.

\subsection{Identification and Support Diagnostics}
\label{sec:controlled_mechanisms}

We first test the probability correction without shrinkage. Each condition contains 500 simulated candidate pools and 12 search runs per pool. Document features and evidence scores are generated once and then held fixed across selection and stopping conditions. Uncertainty is therefore calculated across pools. Repeated runs of the same pool are not treated as independent. The search policy never sees the evidence scores of unopened documents. In Table~\ref{tab:mechanism_bias}, ``Biased'' selection favors documents according to their evidence-related attributes, and ``Informative'' stopping depends on the observed search history.

\begin{table}[t]
\centering
\small
\caption{Bias in the unshrunk estimator under controlled selection and stopping. The comparison removes only the probability correction for the decision being tested. Parentheses give standard errors calculated across questions, with repeated runs of each question kept together.}
\label{tab:mechanism_bias}
\begin{tabular}{llcc}
\toprule
Selection & Stopping & Full correction & Mechanism removed \\
\midrule
No & No & $-0.0032\;(0.0031)$ & $-0.0032\;(0.0031)$ \\
No & Informative & $-0.0032\;(0.0052)$ & $-0.0086\;(0.0028)$ \\
Biased & No & $-0.0049\;(0.0034)$ & $\phantom{-}0.0802\;(0.0036)$ \\
Biased & Informative & $\phantom{-}0.0103\;(0.0091)$ & $\phantom{-}0.0876\;(0.0096)$ \\
\bottomrule
\end{tabular}
\end{table}

Table~\ref{tab:mechanism_bias} shows that each weight corrects the decision it was designed for. Removing selection weights introduces bias when the search favors documents related to their evidence values. Removing continuation weights matters when stopping depends on the search history. These tests validate the unshrunk probability correction. Shrinkage is applied later to stabilize short searches. Once tuning on separate questions selects $\lambda=0.01$, removing either weight changes MAE by only about $10^{-4}$ (Section~\ref{app:identification_details}).

\paragraph{Limited support.} We next make document choice deterministic, leaving only $3.6\%$ of candidate documents reachable on average. A point estimate can no longer recover the full-pool target from these trajectories. Instead, the interval in Eq.~\ref{eq:support_bounds} covers all 16,000 simulated targets. When exploration remains positive but small, CESS still combines observed prediction errors with candidate-level predictions. As overlap decreases, tuning places more weight on the prediction. Section~\ref{app:pareto_robustness} reports additional results.

\paragraph{Perturbations.} We vary both prediction quality and the accuracy of the selection probabilities. Probabilities estimated from visible ranking features give MAE $0.0570$, close to the $0.0576$ obtained with exact probabilities and the same linear outcome predictor. Under extremely limited overlap, tuning selects $\lambda=0$ and falls back to the full-pool prediction rather than amplifying unstable inverse-probability weights. Section~\ref{app:pareto_robustness} reports additional settings.

\subsection{Controlled Ranking Experiment}
\label{app:ranking_simulation}

We construct finite candidate pools in which each document has an evidence value, observable pre-selection features, and a ranking attribute correlated with that value. The two ranking conditions reverse the preference induced by this attribute while keeping the candidate documents and their evidence values fixed. Search remains adaptive: previously observed evidence affects subsequent queries and the probability of continuing, and a uniform exploration component gives every candidate document positive selection probability.

The experiment uses 30 candidate documents per task, a maximum budget of four selection rounds, and 20 random seeds with 400 test tasks per seed. Paired ranking conditions share the same task pools and random draws for document selection, query updates, and stopping. This pairing isolates changes caused by the ranking intervention rather than changes in candidate evidence or random sampling.

We compare \method{Opened Mean} with the unshrunk and unclipped sequential correction underlying CESS. The former produces a conclusion gap of 0.7478 between the two ranking conditions, whereas the latter reduces the gap to 0.0014, a 99.8\% reduction. This experiment tests whether the sequential correction removes ranking-induced shifts when selection and continuation probabilities are known.

\subsection{Additional Identification Diagnostics}
\label{app:identification_details}

\paragraph{Probability corrections after shrinkage.} On separate tuning questions, the chosen coefficient is $\lambda=0.01$. When the initial prediction, coefficient, search runs, and score-clipping rule are unchanged, removing selection or stopping probabilities changes MAE by only about $10^{-4}$. Table~\ref{tab:mechanism_bias} tests the unshrunk correction. With $\lambda=0.01$, little of that correction enters the final estimate. We therefore use the first experiment to test whether the weights remove bias and the accuracy and stability comparison to assess whether they help at the chosen coefficient.

\section{Proofs and Diagnostic Guarantees}
\label{app:proofs}

\subsection{Design unbiasedness and policy invariance}

\begin{proof}[Proof of Theorem~\ref{thm:design_unbiasedness}]
Let $R_t$ indicate that round $t$ is reached and write $e_i=Y_i-m_\phi(X_i)$. Conditional on a reachable history,
\begin{equation}
\mathbb E\!\left[\frac{e_{J_t}}{Np_t(J_t\mid H_t)}\,\middle|\,H_t,R_t=1\right]
=\frac{1}{N}\sum_{i=1}^{N}e_i.
\end{equation}
Inverse-continuation weighting gives $\mathbb E[R_t/s_t\mid\mathcal D]=1$. Therefore each of the $K$ potential rounds has expected weighted residual equal to $N^{-1}\sum_i e_i$, including rounds censored by stopping. Averaging over $K$ rounds and adding $\overline m=N^{-1}\sum_i m_\phi(X_i)$ yields $N^{-1}\sum_iY_i=\theta^*$.
\end{proof}

The statement applies to the raw estimator before clipping. Clipping, estimated probabilities, or an incorrect continuation model can introduce finite-sample dependence on the search policy.

\subsection{Invariance--attribution incompatibility}

\begin{proof}[Proof of Theorem~\ref{thm:invariance_attribution}]
Linearity of conditional expectation gives
\begin{equation}
\mathbb E[\widetilde\theta(g)-\widetilde\theta(g')\mid\mathcal D]
=\mathbb E[\widetilde\theta(g)\mid\mathcal D]
-\mathbb E[\widetilde\theta(g')\mid\mathcal D]=0.
\end{equation}
The opened-evidence effect is $\tau_O(g,g')=O(g)-O(g')$. Hence the corrected contrast equals that effect if and only if $\tau_O(g,g')=0$.
\end{proof}

The support interval in Eq.~\ref{eq:support_bounds} is sharp because the unreachable mass can attain either endpoint $L$ or $U$ without further restrictions.

\section{Paired Interventions on LLM Search}
\label{app:new_experiments}

\paragraph{Intervention design.} For each MS2 test question and model, we run all four combinations of agent versus uniform document choice and agent-controlled versus fixed-length search. We repeat each combination three times, pairing conditions within the same question and repetition. The four averages over documents read yield the selection, stopping, interaction, and total effects in Eq.~\ref{eq:mechanism_decomposition}. Every fixed-horizon run executes ten rounds. Each query model contributes $200\times4\times3=2{,}400$ trajectories, for 4,800 trajectories in total. These form 1,200 task--model--replicate blocks, each with four intervention conditions. Candidate probabilities are strictly positive. The smallest recorded probability is approximately $0.005$.

\paragraph{Repeatability of measured effects.} We estimate the three-run one-way intraclass correlation, ICC(1,3). Selection effects have ICC $0.679$ for OLMo and $0.685$ for Qwen. Total-effect ICC is $0.542$ and $0.639$. Stopping-effect ICC is $0.099$ and $0.102$. We therefore retain continuous intervention effects and their uncertainty rather than replacing them with a single per-task mechanism label.

\paragraph{Corrected contrasts and intervention effects.} The selection-policy effect compares agent and uniform selection under fixed stopping while allowing evidence to influence later queries. Its Spearman correlation with the contrast between CESS estimates is $0.269$ for OLMo and $0.236$ for Qwen. When later queries are held fixed by replaying draws on a common query sequence, the correlations are $-0.027$ and $0.086$. These results instantiate Theorem~\ref{thm:invariance_attribution}: a contrast designed to remove policy-dependent selection is a different estimand from the effect of changing that policy.

\section{Transfer to a Public Deep Research Agent}
\label{app:public_agent}

\paragraph{Protocol.} We integrate CESS with the public Open Deep Research agent and use Qwen2.5-32B-Instruct as its local language model. The fixed repository revision is \texttt{1b7d2e80db9faa586165c60e09096dbbfd483a64}. We retain the agent's research plan, adaptive query generation, evidence-use process, and final synthesis. We replace only the search tool with a fixed set of PERSPECTRUM documents. This makes every candidate document and selection probability observable. The supporting-first and opposing-first branches begin from the same checkpoint and first query and use the same Gumbel random vector at each round. Once they select different evidence, their later queries may diverge naturally. An exact-input cache fixes the language-model response for every repeated input, and the saved logs are sufficient to reconstruct every document selection and reported statistic.

\paragraph{Data separation and execution checks.} Ranking strength is selected on separate calibration tasks before the test configuration is fixed. Estimator performance is not used in this choice. The test contains 36 tasks, two rankings, and three seeds, giving 216 trajectories and 108 paired units. Every trajectory completes four searches and runs the public agent's synthesis component. Within each pair, the pre-intervention state and Gumbel draws are identical. Every logged selection-probability vector is positive, normalized, and consistent with the selected document. The saved state reconstructs every selection. The minimum candidate probability is $0.00979$, and the largest realized importance weight for the uniform target is $4.0187$.

\paragraph{Accuracy and ranking dependence.} Each trajectory contains four searches. The primary short-budget analysis uses its first two selections ($K=2$), while the secondary analysis uses all four ($K=4$). Direction error assigns both the estimate and the pool target to a positive, neutral, or negative category using thresholds $\pm0.05$ and then records whether the categories disagree. Table~\ref{tab:public_agent_transfer} reports $K=2$. Compared with Opened Mean, CESS lowers MAE from $0.5727$ to $0.2287$, ranking sensitivity from $0.5849$ to $0.0746$, and direction error from $0.4907$ to $0.3102$. The paired reductions are $0.3440$ in MAE (95\% CI: $[0.2614,0.4238]$), $0.5103$ in ranking sensitivity (95\% CI: $[0.3560,0.6723]$), and $18.1$ percentage points in direction error (95\% CI: $[4.17,31.48]$ points). Relative to the unshrunk sequential doubly robust estimator, CESS lowers MAE by $0.0766$ and ranking sensitivity by $0.0746$. Both paired intervals exclude zero. The median effective sample size (ESS) is $1.70$ at $K=2$ and $2.81$ at $K=4$, which motivates stabilization under short budgets.

\paragraph{Comparison with strong shrinkage.} At $K=2$, the matched-MAE sensitivity difference between CESS and outcome-regression (OR) shrinkage is $0.0100$ with a 95\% interval of $[-0.1780,0.0796]$. At $K=4$, CESS achieves $0.0461$ lower MAE than OR shrinkage at matched sensitivity (95\% CI: $[-0.0783,-0.0169]$). Together with Table~\ref{tab:public_agent_transfer}, these results establish transfer gains over opened evidence and unshrunk sequential estimation, and an accuracy gain over OR shrinkage at the longer budget.

\paragraph{Trajectory divergence and scope.} All paired trajectories begin with the same query. By round four, $63.9\%$ have generated different queries and $75.0\%$ have selected different document sequences. Position-wise document overlap is $68.5\%$ at $K=2$ and $66.4\%$ at $K=4$. The design thus evaluates evidence-selection correction after the ranking intervention has propagated through adaptive query generation and document choice. The fixed four-search horizon isolates this transfer test from the separate stopping analysis in Section~\ref{app:new_experiments}.

\section{Five-Round PERSPECTRUM Evaluation}
\label{sec:generalization}

This evaluation uses 36 non-medical PERSPECTRUM topics and up to five search rounds. Its CESS variant starts from the Opened Mean and moves $20\%$ toward the sequential correction: \(\widehat{\theta}_{\mathrm{CESS}}=\Pi_{[-1,1]}[\widehat{\theta}_{\mathrm{naive}}+0.20(\widehat{\theta}_{\mathrm{DR}}-\widehat{\theta}_{\mathrm{naive}})]\). The coefficient is selected on separate tuning questions and then fixed for all 36 evaluation topics. This differs from the ten-round variant, which starts from the full-pool prediction. We report MAE, root mean squared error (RMSE), average within-question bias, ranking sensitivity, and direction-error rate. All are better when lower.
\begin{table}[t]
    \centering
    \small
    \setlength{\tabcolsep}{9pt}
\caption{Aggregate results on 36 PERSPECTRUM topics. Bold values indicate the better result for each metric.}
    \label{tab:spectrum_aggregate}
    \begin{tabular}{lccc}
        \toprule
        Metric
        & \method{Opened Mean}
        & \methodname{CESS}
        & Change \\
        \midrule
        MAE $\downarrow$
        & 0.4734 & \textbf{0.4250} & $-10.2\%$ \\
        RMSE $\downarrow$
        & 0.5779 & \textbf{0.5240} & $-9.3\%$ \\
        Task-level bias $\downarrow$
        & 0.4280 & \textbf{0.3638} & $-15.0\%$ \\
        Ranking sensitivity $\downarrow$
        & 1.0651 & \textbf{0.8965} & $-15.8\%$ \\
        Direction-error rate $\downarrow$
        & \textbf{0.4190} & 0.4213 & $+0.23$ pp \\
        \bottomrule
    \end{tabular}
\end{table}

Table~\ref{tab:spectrum_aggregate} shows that this CESS version lowers MAE by 10.2\%, from 0.4734 to 0.4250, with a paired difference of $-0.0484$ (95\% CI: $[-0.0569,-0.0395]$). RMSE decreases by 9.3\%, average bias within questions by 15.0\%, and sensitivity to document order by 15.8\%. Across query models and search controllers, MAE falls by 9.7\% to 10.7\% (Table~\ref{tab:spectrum_components}). The fraction of conclusions pointing in the wrong direction rises by 0.23 percentage points, with a confidence interval that includes zero.
\begin{table}[t]
    \centering
    \small
    \setlength{\tabcolsep}{10pt}
\caption{PERSPECTRUM MAE across query generators and search controllers. Lower is better.}
    \label{tab:spectrum_components}
    \begin{tabular}{llccc}
        \toprule
        Component
        & Configuration
        & \method{Opened Mean}
        & \methodname{CESS}
        & Reduction \\
        \midrule
        Query generator
        & Qwen
        & 0.4834 & \textbf{0.4344} & 10.1\% \\
        Query generator
        & Llama
        & 0.4634 & \textbf{0.4157} & 10.3\% \\
        Search controller
        & Agent
        & 0.4885 & \textbf{0.4364} & 10.7\% \\
        Search controller
        & MMR
        & 0.4583 & \textbf{0.4137} & 9.7\% \\
        \bottomrule
    \end{tabular}
\end{table}

\section{Ten-Round PERSPECTRUM Evaluation}
\label{app:horizon10_details}

\paragraph{Protocol and data separation.} We use 267 questions for tuning and 36 for evaluation, with no question or document shared between the two groups. The LLM generates each new query from summaries of the documents already read. The search controller combines term frequency--inverse document frequency (TF--IDF) relevance, visible evidence direction, repetition penalties, and the assigned ranking order. In Table~\ref{tab:spectrum_components}, MMR denotes maximal marginal relevance, a diversity-based controller that does not use the LLM agent. Under adaptive stopping, the LLM recommends whether to stop. The controller converts that recommendation into a probability of 0.75 or 0.10 and samples the decision. The continuation probability $s_t$ records this sampling rule, not the model's confidence. Every candidate document has positive probability under this controller. This guarantee applies only to the specified pool, not to the open web.

\paragraph{Baseline tuned on separate questions.} Let $o$ be the average score of documents read and let $a$ be the mean target from the tuning questions. The shrinkage baseline estimates $\Pi_{[-1,1]}[o+\alpha(a-o)]$, where $\alpha$ determines how far the read-document average moves toward $a$. We choose $\alpha$ from $\{0,0.05,\ldots,1\}$ using MAE averaged equally across the 267 tuning questions from the five-round experiment. The selected coefficient is $0.75$, and we keep it unchanged for the ten-round evaluation. We also include the tuning-question mean alone. It never changes with document ranking, which shows why stability must be considered together with accuracy.

Table~\ref{tab:horizon10} reports the aggregate ten-round results before the clipping-order and prefix analyses below.

\begin{table}[t]
\centering
\small
\setlength{\tabcolsep}{5.5pt}
\caption{PERSPECTRUM results for searches of up to ten rounds. $S$ measures the difference between rankings after averaging repeated runs. Lower is better.}
\label{tab:horizon10}
\begin{tabular}{lccc}
\toprule
Method & MAE & \(S\) & Direction disagreement \\
\midrule
Opened Mean & 0.4108 & 0.9574 & 0.4186 \\
Outcome Regression & 0.4200 & \textbf{0.0000} & 0.6667 \\
Tuning-set Mean & 0.2787 & \textbf{0.0000} & \textbf{0.3889} \\
Opened Mean shrunk to Tuning Mean & \textbf{0.2427} & 0.2393 & 0.3893 \\
\methodname{CESS} & 0.3103 & 0.1608 & 0.5208 \\
\bottomrule
\end{tabular}
\end{table}

\paragraph{How scores are averaged.} For question $q$, combination $c$ of model, search controller, and stopping rule, and document ranking $r$, let $\bar\theta_{q,c,r}$ be the average estimate across repeated runs. Let $r=\mathrm{pro}$ denote supporting-first ranking and $r=\mathrm{anti}$ opposing-first ranking. Ranking sensitivity $S$ averages their absolute difference:
\begin{equation}
S=\frac{1}{Q}\sum_{q=1}^{Q}\frac{1}{|\mathcal C_q|}\sum_{c\in\mathcal C_q}\left|\bar\theta_{q,c,\mathrm{pro}}-\bar\theta_{q,c,\mathrm{anti}}\right|.
\label{eq:seed_averaged_sensitivity}
\end{equation}
For MAE, we average absolute errors within each question, then give every question equal weight. For direction disagreement, estimates above $0.05$ are positive, below $-0.05$ negative, and otherwise neutral. Any different pair of labels counts as disagreement. To quantify uncertainty, we resample whole questions 10,000 times, retaining every run and experimental condition for each question (random seed 2026092207). This definition of $S$ averages runs before comparing rankings. The earlier audit compared rankings before averaging runs. The change from its CESS value $0.2925$ to $0.1608$ comes from this change of metric, not from a change in CESS. We therefore report only MAE from the earlier round-by-round table below.

\paragraph{Order of clipping and shrinkage.} We compare whether an estimate is restricted to $[-1,1]$ before or after shrinkage, while separately testing the effect of selection weights. Let $b=\overline m$ be the initial prediction, $\Pi=\Pi_{[-1,1]}$ the restriction to $[-1,1]$, and $\mathcal H$ the observed search rounds:
\begin{equation}
c_{\mathrm{full}}=\frac{1}{10}\sum_{t\in\mathcal H}\frac{y_t-m_t}{Np_ts_t},\qquad c_{\mathrm{uniform}}=\frac{1}{10}\sum_{t\in\mathcal H}\frac{y_t-m_t}{s_t}.
\end{equation}
To remove the selection correction, we substitute $p_t=1/N$. The same document values, stopping weights, maximum search length, initial prediction, and shrinkage coefficient remain in place. Table~\ref{tab:horizon10_ablation} compares $A=\Pi[b+0.5(\Pi[b+c_{\mathrm{full}}]-b)]$ with $B=\Pi[b+0.5c_{\mathrm{full}}]$. $C$ and $D$ use $c_{\mathrm{uniform}}$ in the corresponding calculation. These are four calculations on the same search runs, not four new search policies. $A$ is the reported CESS version. $B$ starts from the unshrunk correction. The recalculated values match the saved estimates within floating-point precision. The unshrunk correction falls outside $[-1,1]$ in 13.62\% of runs, so calculation order matters.

\begin{table}[t]
\centering
\small
\caption{Four calculations on the same ten-round search runs, varying the order of clipping and shrinkage and whether document-selection weights are used. All retain the same initial prediction, coefficient 0.5, and stopping correction. $S$ follows Eq.~\ref{eq:seed_averaged_sensitivity}. $D$ is the fraction of conclusions with a different direction from the reference.}
\label{tab:horizon10_ablation}
\begin{tabular}{llccc}
\toprule
Variant & Selection correction & MAE & \(S\) & \(D\) \\
\midrule
A (clip, then shrink) & Yes & 0.3103 & 0.1608 & 0.5208 \\
C (clip, then shrink) & No & 0.3071 & 0.4346 & 0.5278 \\
B (shrink, then clip) & Yes & 0.3279 & 0.1977 & 0.5208 \\
D (shrink, then clip) & No & 0.3111 & 0.4650 & 0.5278 \\
\bottomrule
\end{tabular}
\end{table}

The paired intervals in Appendix~\ref{app:horizon10_details} show a sensitivity reduction under each clipping order, while the MAE effect depends on the configuration. This separates the contribution of selection information from dependence on one specific clipping order.

\paragraph{Stopping correction.} Setting $s_t=1$ at every round gives MAE $0.3146$, compared with $0.3103$ for full CESS. The paired difference is $-0.0042$, with a 95\% CI from $-0.0117$ to $0.0030$. This comparison combines fixed-length runs, for which $s_t=1$ by design, and adaptive runs. It complements the controlled identification result in Table~\ref{tab:mechanism_bias}.

\paragraph{Results after each search round.} We include all search runs at each round, including those that have already stopped. The correction uses that round as its budget. Table~\ref{tab:horizon10_prefix} shows that CESS improves estimation relative to Opened Mean at every reported round, and its MAE declines from \(0.4035\) after one round to \(0.3103\) after ten. These results come from the same ten-round policy with ranking interventions active throughout, rather than from separate experiments with different search budgets.

\begin{table}[t]
\centering
\small
\caption{MAE after each search round, including runs that stopped earlier.}
\label{tab:horizon10_prefix}
\begin{tabular}{ccc}
\toprule
Round & Opened Mean & CESS \\
\midrule
1 & 0.8424 & 0.4035 \\
2 & 0.6189 & 0.3841 \\
4 & 0.5039 & 0.3588 \\
6 & 0.4631 & 0.3338 \\
8 & 0.4305 & 0.3199 \\
10 & 0.4108 & 0.3103 \\
\bottomrule
\end{tabular}
\end{table}

\paragraph{How much information the weights retain.} After weighting, the effective sample size is 4.35 at the median and 1.61 at the fifth percentile. The 95th percentile of the largest weight per run is 9.22, and the smallest observed product $p_t s_t$ is 0.00274. Confidence intervals here describe average comparisons across questions, not the conclusion for any single question. The ten-round runs do not include report-level probabilities or confidence intervals calibrated for adaptive search, so we do not evaluate confidence in individual report conclusions.

\section{Accuracy, Ranking Stability, and Model Perturbations}
\label{app:pareto_robustness}

\paragraph{Comparing methods at the same error or stability.} For each dataset, query model, search length, and initial prediction, we evaluate 21 shrinkage coefficients and linearly interpolate between neighboring results. At the CESS error, we estimate the simpler method's ranking sensitivity. At the CESS sensitivity, we estimate its error. All 28 test-set comparisons are bracketed without extrapolation, and the same rule is applied to 10,000 resamples of whole questions. Table~\ref{tab:matched_frontier_full} reports both matched directions.

\begin{table}[t]
\centering
\small
\setlength{\tabcolsep}{4.5pt}
\caption{Interpolated comparisons across 28 dataset--model--budget--prediction settings. ``No detected difference'' denotes a 95\% interval containing zero. The counts are descriptive and are not adjusted for multiple comparisons.}
\label{tab:matched_frontier_full}
\begin{tabular}{lccc}
\toprule
Matched quantity & CESS lower & No detected difference & CESS higher \\
\midrule
Matched MAE, compare sensitivity & 10 & 18 & 0 \\
Matched sensitivity, compare MAE & 3 & 15 & 10 \\
\bottomrule
\end{tabular}
\end{table}

\paragraph{Prediction and probability perturbations.} We vary outcome-prediction quality and selection-probability accuracy in a controlled simulation with 8,000 evaluation questions and a separate set for tuning $\lambda$. Table~\ref{tab:robustness_summary} shows a gradual loss of accuracy as the outcome predictor becomes weaker. Probabilities estimated from visible features closely match the setting that uses exact probabilities with the same linear predictor. Under extremely limited overlap, tuning selects $\lambda=0$ and falls back to the full-pool prediction instead of amplifying large inverse-probability weights.

\begin{table}[t]
\centering
\small
\caption{Selected results from the controlled simulation. Each row uses 8,000 test questions and a coefficient chosen on separate tuning questions. Lower MAE and ranking sensitivity $S$ are better.}
\label{tab:robustness_summary}
\begin{tabular}{llcccc}
\toprule
Selection prob. & Prediction & $\lambda$ & MAE & $S$ & Min. prob. \\
\midrule
Exact & Oracle rule & 0.05 & 0.0453 & 0.0137 & 0.00667 \\
Exact & Linear (all features) & 0.05 & 0.0576 & 0.0172 & 0.00667 \\
Exact & Linear (reduced) & 0.05 & 0.0768 & 0.0225 & 0.00667 \\
Exact & Constant & 0.20 & 0.1854 & 0.0957 & 0.00667 \\
Estimated selection prob. & Linear (all features) & 0.05 & 0.0570 & 0.0137 & 0.01887 \\
Limited overlap & Linear (all features) & 0.00 & 0.0583 & 0.0000 & 0.000006 \\
\bottomrule
\end{tabular}
\end{table}

The oracle predictor uses the simulation's data-generating rule. The reduced linear predictor omits an outcome-relevant feature. The estimated-probability row fits the logged selection policy from visible ranking features, while the limited-overlap row reduces exploration. In deployed audits, CESS uses recorded controller probabilities and reports an interval rather than a point estimate when a candidate document has zero selection probability.

\end{document}

%% file: math_commands.tex
\usepackage{amsmath,amsfonts,bm}

\def\eqref#1{equation~\ref{#1}}

\def\1{\bm{1}}

\DeclareMathAlphabet{\mathsfit}{\encodingdefault}{\sfdefault}{m}{sl}
\SetMathAlphabet{\mathsfit}{bold}{\encodingdefault}{\sfdefault}{bx}{n}

